\newlength{\dhatheight}
\newcommand{\Tree}{\mathsf{T}}
\newcommand{\regret}{\mathrm{regret}}
\newcommand{\Probs}{\Pi}
\newcommand{\alg}{\mathbb{A}}
\renewcommand{\H}{\mathbb H}
\newcommand{\SOA}{{\rm SOA}}
\newcommand{\LD}{{\rm L}}
\newcommand{\SG}{\mathrm{d_{SG}}}
\newcommand{\J}{\mathcal{J}}
\newcommand{\G}{{\mathcal{G}}}
\renewcommand{\epsilon}{\varepsilon}
\newcommand{\X}{\mathcal X}
\newcommand{\Y}{\mathcal Y}
\DeclareSymbolFont{bbold}{U}{bbold}{m}{n}
\DeclareSymbolFontAlphabet{\mathbbold}{bbold}
\newcommand{\ind}{\mathbbold{1}}
\newcommand{\Q}{\mathcal{Q}}
\renewcommand{\S}{\mathcal S}
\newcommand{\nats}{\mathbb{N}}
\newcommand{\reals}{\mathbb{R}}
\newcommand{\E}{\mathop{\mathbb{E}}}
\newcommand{\argmax}{\mathop{\rm argmax}}
\newcommand{\argmin}{\mathop{\rm argmin}}
\newcommand{\ignore}[1]{}
\newcommand{\oldstuff}[1]{}
\colorlet{sgreen}{black!45!green}
\newtheorem{problem}{Open Problem}
\newsavebox{\savepar}
\newcommand{\vast}{\bBigg@{3}}
\newcommand{\Vast}{\bBigg@{4}}
\renewenvironment{proof}[1][]{\par\noindent{\bf Proof #1\ }}{\hfill\BlackBox\\[2mm]}
\newcommand\subsub[1]{\raisebox{-.2ex}{$\scriptscriptstyle{#1}$}}
\newcommand\lowsub[1]{\raisebox{-.1ex}{$\scriptstyle{#1}$}}
\title[Multiclass Online Learning and Uniform Convergence]{Multiclass Online Learning and Uniform Convergence}
\begin{document}
\maketitle

\begin{abstract}
We study multiclass classification in the agnostic adversarial online learning setting.
As our main result, we prove that any multiclass concept class is agnostically learnable if and only if its Littlestone dimension is finite.
This solves an open problem studied by 
Daniely, Sabato, Ben-David, and Shalev-Shwartz (2011,2015) who handled the case when the number of classes (or labels) is bounded. We also prove a separation between online learnability and online uniform convergence by exhibiting an easy-to-learn class whose sequential Rademacher complexity is unbounded.

Our learning algorithm uses the multiplicative weights algorithm, with a set of experts defined by executions of the Standard Optimal Algorithm on subsequences of size Littlestone dimension.
We argue that the best expert has regret at most Littlestone dimension relative to the best concept in the class.  This differs from the well-known covering technique of Ben-David, P\'{a}l, and Shalev-Shwartz (2009) for binary classification, where the best expert has regret zero.
\end{abstract}

\begin{keywords}
Online Learning, Multiclass Classification, Agnostic Learning, Regret Bound, Littlestone Dimension, Learnability
\end{keywords}

\section{Introduction}

Many important machine learning tasks involve a large prediction space;
for instance, in language models the prediction space corresponds to the language size (number of words).
Other examples include recommendation systems, image object recognition, protein folding prediction, and more.

Consequently, multiclass prediction problems have been studied extensively in the literature.
\citet*{NatarajanT:88} and \citet*{natarajan:89} initiated the study of multiclass prediction in the basic PAC setting.
They characterized the concept classes that satisfy uniform convergence via a natural combinatorial
parameter called the graph dimension. They further characterized PAC learnability in the case when the number of labels is bounded via another combinatorial parameter called the Natarajan dimension.
Whether the Natarajan dimension characterizes learnability in general (i.e., even when the number of labels can be infinite)
has remained open, until recently \citet*{brukhim:22} exhibited an unlearnable concept class with Natarajan dimension~$1$.
\citet{brukhim:22} showed that multiclass PAC learnability is in fact captured by a different combinatorial parameter
which they called the Daniely Shalev-Shwartz (DS) dimension, after \citet{daniely:14} who defined it.

Remarkably, the nature of multiclass PAC learnability with a bounded label space is very different
    than the one when the number of labels is infinite.
    For instance, in the former PAC learnability and uniform convergence are equivalent\footnote{This equivalence yields the fundamental \emph{empirical risk minimization principle} in PAC learning.}.
    In contrast, already in the 80's, \citet{Natarajan88up} demonstrated an easy-to-learn concept class 
    over an infinite label space which does not satisfy uniform convergence.
    More recently, \citet*{daniely:11,daniely:15,daniely:14} studied variants of the ERM principle in multiclass learning, 
    and even demonstrated a PAC learnable class which cannot be learned properly (i.e., by learners whose hypothesis is always an element of the concept class).

How about agnostic versus realizable PAC learning? Here, it turns out
    that the two are equivalent; that is, any class that is learnable in the realizable case is also learnable in the agnostic case.
    This was shown by~\citet*{david:16} using sample compression arguments.

{\vskip 2mm}

Perhaps surprisingly, the corresponding questions in the setting of online multiclass classification are still open.
\citet*{daniely:11,daniely:15} initiated the study of online multiclass classification and show that, like in the binary case, the Littlestone dimension characterizes learnability in the realizable setting. They also studied the agnostic setting, and showed that when the number of labels is finite, then agnostic learnability is equivalent to realizable case learnability. They left open whether this equivalence extends to unbounded label space (or equivalently whether the dependence on the number of labels in the optimal regret can be removed).
In this work we resolve this question by showing that agnostic- and realizable-case learnability remain equivalent, even when the number of labels is infinite.

How about uniform convergence versus learnability? In the past 15 years an online analogue of uniform convergence has emerged from the 
introduction of the sequential Rademacher complexity \citep*{rakhlin:10,rakhlin2015online}, and of the adversarial laws of large numbers framework~\citep*{alon:21}. This raises the question whether online uniform convergence and online learnability are equivalent.
When the number of labels is finite, known results imply that indeed the two are equivalent.
How about classes with an infinite number of labels? In this work we resolve this question
by introducing a combinatorial parameter which we call the \emph{sequential graph dimension}
that characterizes online uniform convergence in the multiclass setting. 
Furthermore we identify an online learnable class with an unbounded sequential graph dimension,
thus separating online uniform convergence from online learnability.

In both the PAC and online settings, 
our interest in studying unbounded label spaces has multiple motivations.
One main interest is in establishing sharp enough guarantees to reflect the intuitive fact that the optimal performance should not inherently depend on the number of possible labels: that is, that the latter has no explicit significance to the optimal sample complexity (in PAC learning) or regret (in online learning), even when finite.
As many modern learning problems have enormous label spaces (e.g., face recognition), this is quite relevant, 
and has recently been studied in the machine learning literature under the name ``extreme classification'', 
where the number of possible labels is vast 
(possibly exceeding even the data set size).
More abstractly, often in mathematics it is the case that infinities clarify concepts and phenomena (e.g., the notion of continuity, based on limits).  
Similarly, focusing on infinite label spaces is a natural way to abstract away an irrelevant detail, which helps to clarify what is the ``correct'' way to approach the problem.

\subsection{Setting and Summary of Main Results}
\label{sec:main-results}

We begin with the basic setup.
Let $\X$ and $\Y$ be arbitrary non-empty sets, 
called the \emph{instance space} and \emph{label space}, respectively.
Let $\H \subseteq \Y^{\X}$ be an arbitrary set of functions, called the \emph{concept class}.
Our results will hold for any 
$(\X,\Y,\H)$, and hence are fully general.

Denote by $\Probs(\Y)$ the set of probability measures on $\Y$.\footnote{The associated $\sigma$-algebra is of little consequence here, except that singleton sets $\{y\}$ should be measurable.} 
A \emph{learning algorithm} is a mapping 
$\alg : (\X \times \Y)^{*} \times \X \to \Probs(\Y)$.
Intuitively, $\alg(X_1,Y_1,\ldots,X_{t-1},Y_{t-1},X_{t})$ outputs a prediction $\hat{y}_{t}$ of the label $Y_{t}$ of the test point $X_{t}$, after observing the history $(X_1,Y_1),\ldots,(X_{t-1},Y_{t-1})$ and the test point $X_t$.
The output space $\Probs(\Y)$ generalizes this to allow for randomized algorithms: that is, the predicted label $\hat{y}_{t}$ may be \emph{randomized}.  This is known to be necessary for agnostic online learning \citet*{cesa-bianchi:06}. 
For simplicity, when $\alg$ and $X_1,Y_1,\ldots,X_{t-1},Y_{t-1},X_t$ are clear from the context, 
we denote by $p_{t} \in \Pi(\Y)$ the output $\alg(X_1,Y_1,\ldots,X_{t-1},Y_{t-1},X_t)$, 
and for any $y \in \Y$ we denote by $p_{t}(y)$ the probability of the singleton set $\{y\}$ under the probability measure $p_t$.

For any $T \in \nats$, called the \emph{time horizon}, 
the \emph{regret} of a learning algorithm $\alg$ is defined as 
\begin{equation*}
\regret(\alg,T) = \sup_{(X_1,Y_1),\ldots,(X_T,Y_T)} \left(\sum_{t=1}^{T} \left( 1 - p_{t}(Y_t) \right)\right) - \left( \min_{h \in \H} \sum_{t=1}^{T} \ind[ h(X_t) \neq Y_t ] \right).  
\end{equation*}
As is well-known, this may be interpreted as the worst-case value of the difference between the expected number of mistakes the algorithm $\alg$ makes in its predictions $\hat{y}_t$ (i.e., how many times $t$ have $\hat{y}_t \neq Y_t$) and the number of mistakes made by the best function in the class $\H$ (where ``best'' means the function making fewest mistakes on the sequence).

This is often also interpreted as a sequential game between the learner and an adversary.  On each round $t$, the adversary first chooses a value for $X_t$, the learner observes this value and chooses a probability measure $p_t$; the adversary observes this $p_t$ and chooses a class label $Y_t$, and the learner suffers a loss $1-p_t(Y_t)$, representing the probability that its randomized prediction $\hat{y}_t$ is incorrect.  The overall objective of the learner is to achieve low regret on this sequence of plays, that is, 
$\left(\sum_{t=1}^{T} \left( 1 - p_{t}(Y_t) \right)\right) - \left( \min\limits_{h \in \H} \sum_{t=1}^{T} \ind[ h(X_t) \neq Y_t ] \right)$, 
whereas the adversary's objective is to maximize this quantity.
$\regret(\alg,T)$ represents the value of this objective when the adversary plays optimally against the algorithm $\alg$.

A concept class $\H$ is said to be \emph{agnostically online learnable} if
\begin{equation*}
    \inf_{\alg}~ \regret(\alg,T) = o(T).
\end{equation*}

The key quantity of interest in online learning (be it binary or multiclass) is the \emph{Littlestone dimension}, $\LD(\H)$ (see Section~\ref{sec:definitions} for the definition), whose finiteness is known to be necessary and sufficient for online learnability in the \emph{realizable} case (i.e., when the subtracted term in $\regret(\alg,T)$ is zero).
\citet*{daniely:11,daniely:15} 
effectively asked the following question regarding whether this fact extends to the agnostic setting: 
\begin{center}
Is any concept class $\H$ agnostically online learnable if and only if $\LD(\H) < \infty$?
\end{center}


\citet*{daniely:11,daniely:15} proved the necessity direction (which, as they note, follows readily from arguments of \citealp*{littlestone:88}), that is, that any class with $\LD(\H)=\infty$ is not online learnable (indeed, even in the realizable case).
\citet{daniely:15} further establish a lower bound when $\LD(\H) < \infty$: 
for any $\alg$ and any $T \geq \LD(\H)$, 
\begin{equation}
\label{eqn:regret-lower-bound}
\regret(\alg,T) = \Omega\!\left(\sqrt{\LD(\H) T}\right).
\end{equation}
However, for the sufficiency direction, their upper bound has a dependence on the number of classes, and hence only establishes sufficiency for a bounded number of classes.
This is analogous to the well-known gap in PAC learnability, which was only recently resolved by \citet{brukhim:22}.
As our main result, we prove the following theorem.

\begin{theorem}
\label{thm:main}
\textbf{(Main Result)}~
Any concept class $\H$ is agnostically online learnable iff $\LD(\H) < \infty$.
\\Moreover, for any $T \in \nats$, there is an online learning algorithm $\alg$ satisfying
\begin{equation*}
\regret(\alg,T) = \tilde{O}\!\left(\sqrt{\LD(\H) T}\right).
\end{equation*}
\end{theorem}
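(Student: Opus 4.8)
The \emph{only if} direction is already known: if $\LD(\H)=\infty$ then $\H$ is not online learnable even in the realizable case, which follows from arguments of \citet*{littlestone:88}, as noted by \citet*{daniely:11,daniely:15}. So I focus on the \emph{if} direction together with the quantitative bound: assuming $d:=\LD(\H)<\infty$, I construct, for each horizon $T\in\nats$, a randomized algorithm with $\regret(\alg,T)=\tilde{O}(\sqrt{dT})$. If $d>T$ the claimed bound is vacuous since always $\regret(\alg,T)\le T$, so assume $d\le T$. The algorithm will be the multiplicative-weights (Hedge) algorithm run over a finite family of deterministic experts, each of which is a ``partial run'' of the multiclass Standard Optimal Algorithm ($\SOA$).

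First I define the experts. For every increasing subsequence $S=(t_1<\dots<t_k)$ of $\{1,\dots,T\}$ with $k\le d$, expert $E_S$ simulates $\SOA$: it keeps a version space $V\subseteq\H$, initialized to $\H$; on round $t$ it predicts the $\SOA$ label for $(V,X_t)$, i.e.\ a label $y$ maximizing $\LD(V_{X_t\to y})$ (ties broken by a fixed rule), and it performs the restriction $V\leftarrow V_{X_t\to Y_t}$ if and only if $t\in S$, doing nothing on rounds outside $S$ (if $V$ ever becomes empty, $E_S$ switches to an arbitrary fixed prediction rule thereafter). The number of experts is $N=\sum_{k=0}^{d}\binom{T}{k}=T^{O(d)}$, so $\ln N=O(d\log T)$.

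The heart of the argument is the claim that for every input sequence some expert $E_S$ makes at most $L^\star+d$ mistakes, where $L^\star:=\min_{h\in\H}\sum_{t=1}^{T}\ind[h(X_t)\neq Y_t]$. To prove it, fix an optimal $h^\star$ and let $M$ be its set of mistake rounds, so $|M|=L^\star$ and the examples indexed by $C:=\{1,\dots,T\}\setminus M$ are realizable by $h^\star\in\H$. Run the mistake-driven form of $\SOA$ on the examples indexed by $C$ in order (restricting $V$ only on rounds where it errs); this variant still makes at most $\LD(\H)=d$ mistakes, because each genuine mistake strictly decreases $\LD(V)$ --- which in turn uses the basic fact that for a fixed instance $x$ at most one label $y$ can have $\LD(V_{x\to y})=\LD(V)$ (two such labels would let one glue the two corresponding shattered trees, rooted at $x$ with the two edge-labels, into a shattered tree of depth $\LD(V)+1$). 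Let $S\subseteq C$ be the set of mistake rounds of this run, so $|S|\le d$. Since $S\subseteq C$, on rounds of $S$ the observed labels coincide with $h^\star$'s, so $h^\star$ stays in the version space forever and $E_S$ never dies; and since both this run and $E_S$ restrict the version space only on the rounds of $S$, in the same order, their version spaces --- hence their predictions --- agree on every round of $C$. Therefore $E_S$ makes at most $d$ mistakes on rounds of $C$ and at most $|M|=L^\star$ on rounds of $M$, for a total of at most $L^\star+d$, as claimed. This is the step I expect to be the main obstacle: getting the simulation correspondence and the mistake-bound invariant for the partial/mistake-driven multiclass $\SOA$ exactly right. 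It is also exactly here --- and not in the binary covering argument of Ben-David, P\'{a}l, and Shalev-Shwartz --- that one unavoidably pays an extra additive $d$: with infinitely many labels one cannot afford an expert for every guessed list of label corrections, so the experts must be allowed to update using the \emph{true} labels on $S$.

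It remains to assemble the bound. Running Hedge over the $N$ experts with per-round loss $\ind[\hat{y}_t^{(S)}\neq Y_t]$ and an appropriately tuned learning rate gives expected regret at most $\sqrt{(T/2)\ln N}=O(\sqrt{dT\log T})$ relative to the best expert in hindsight. Since $\alg$ predicts by sampling an expert from the current Hedge weights, $\sum_{t=1}^{T}(1-p_t(Y_t))$ equals Hedge's expected cumulative loss, which is at most (the best expert's number of mistakes) $+\,O(\sqrt{dT\log T})\le L^\star+d+O(\sqrt{dT\log T})$. Subtracting $L^\star$ and using $d\le\sqrt{dT}$ (valid since $d\le T$) yields $\regret(\alg,T)=O\!\left(d+\sqrt{dT\log T}\right)=\tilde{O}(\sqrt{dT})$, which is in particular $o(T)$; this proves the \emph{if} direction and the stated regret bound.
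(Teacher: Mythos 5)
Your proposal is correct and takes essentially the same approach as the paper: multiplicative weights over experts indexed by at-most-$\LD(\H)$-sized subsets of $\{1,\dots,T\}$, each simulating $\SOA$ with updates confined to its index set, with the key observation that the conservative (mistake-driven) $\SOA$ run on the realizable subsequence where the optimal $h^\star$ is correct identifies an expert with only an additive $\LD(\H)$ regret over $h^\star$. Even the auxiliary points — the reduction to the realizable mistake bound, the count $\ln N = O(\LD(\H)\log T)$, and the contrast with the zero-regret covering of Ben-David, P\'{a}l, and Shalev-Shwartz — match the paper's argument.
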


In light of the lower bound \eqref{eqn:regret-lower-bound}, this further establishes that the \emph{optimal} achievable regret 
for $T \geq \LD(\H)$ satisfies
\begin{equation*}
\min_{\alg} \regret(\alg,T) = \tilde{\Theta}\!\left(\sqrt{\LD(\H) T}\right).
\end{equation*}

While the result of \citet*{daniely:15}, which has a dependence on the number of classes, is based on an extension of the algorithm of \citet{ben2009agnostic} for binary classification, our result modifies this approach.  We take inspiration from the PAC setting, where the only known proof that the agnostic sample complexity is characterized by the DS dimension proceeds by a reduction to the realizable case, wherein the algorithm first identifies a maximal subset of the data, and then applies a compression scheme for the realizable case to this subset \citep*{david:16}.
In our case, this maximal realizable subset appears only in the \emph{analysis}, but serves an important role.  Specifically, our algorithm applies the well-known multiplicative weights experts algorithm, with a family of experts defined by predictions of all possible executions of the SOA (see Section~\ref{sec:realizable})
that are constrained to only \emph{update} 
their predictor in at most $\LD(\H)$ pre-specified time steps.  The important property is that one of these experts corresponds precisely to executing the SOA on a maximal realizable subsequence of the data sequence, and hence makes at most $\LD(\H)$ mistakes on this subsequence (see Section~\ref{sec:realizable}).
This expert therefore has regret only $\LD(\H)$ compared to the best function in $\H$, and a regret bound for the overall algorithm then follows from classical analysis of prediction with expert advice.
We present the detailed proof in Section~\ref{sec:learnability}.

The $\tilde{O}$ in Theorem~\ref{thm:main} hides a factor $\sqrt{\log(T/\LD(\H))}$.
An analogous factor has recently been removed from the best known regret bound for \emph{binary} classification \citep{alon:21}, yielding a regret bound that is optimal up to numerical constants.
It remains open whether an analogous refinement is possible in the multiclass setting.
Specifically, we pose the following open problem.

\begin{problem}
\label{prob:opt}
Is it true that, for any concept class $\H$, 
the optimal regret is $\Theta\!\left(\sqrt{\LD(\H)T}\right)$?
\end{problem}

In addition to the above results for learnability, 
we also study the related question of adversarial uniform laws of large numbers; see Section~\ref{sec:adv-ulln} for definitions.
It was shown in the PAC setting that multiclass learnability is \emph{not} equivalent to a uniform law of large numbers
\citep*{Natarajan88up,daniely:11,daniely:15}, 
in contrast to binary classification where it has long been established that these are equivalent \citep*{vapnik:74}.
Again in the binary classification setting, 
\citet*{alon:21} have established an equivalence between a type of adversarial uniform law of large numbers (AULLN), online learnability, and convergence of the sequential Rademacher complexity (the latter two were already known to be equivalent, \citealp*{rakhlin2015online}).
In the present work, we find that the analogy between PAC and online learning settings holds true once again.
That is, in the multiclass setting with unbounded label space, while the AULLN is again equivalent to convergence of the sequential Rademacher complexity, in this case these properties are \emph{not} equivalent to online learnability.
Indeed, we show (Theorem~\ref{thm:AULLN}) 
that AULLN and convergence of sequential Rademacher complexity are characterized by finiteness of a different combinatorial parameter $\SG(\H)$ called the \emph{sequential graph dimension} (Definition~\ref{def:sequential-graph-dim}), 
and we exhibit an example in which $\LD(\H) = 1$ but $\SG(\H) = \infty$.
All of this remains perfectly analogous to known results for multiclass PAC learning.
Carrying the parallel further, in the case of bounded label spaces, we relate 
these two parameters by 
$\SG(\H) = O\!\left(\LD(\H) \log(|\Y|)\right)$ (Theorem~\ref{thm:L-vs-SG}).
Based on the above, we also state new 
bounds on the achievable regret (Theorem~\ref{thm:finite-Y-regret}): 
namely, $\regret(\alg,T) = O\!\left(\sqrt{\SG(\H) T}\right)$, 
which in the finite $|\Y|$ case, 
further implies 
$\regret(\alg,T) = O\!\left(\sqrt{\LD(\H) T \log(|\Y|)}\right)$.

\section{Definitions}
\label{sec:definitions}

We begin with some basic useful notation.
For any sequence $z_1,z_2,\ldots$, 
for any $t \in \nats \cup \{0\}$, we denote by $z_{\leq t} = (z_1,\ldots,z_t)$ 
and $z_{< t} = (z_1,\ldots,z_{t-1})$, 
interpreting $z_{\leq 0} = z_{< 1} = ()$, the empty sequence.
For simplicity, for a sequence $(x_1,y_1),\ldots,(x_n,y_n)$, 
we denote by $(x_{< t},y_{< t}) = ((x_1,y_1),\ldots,(x_{t-1},y_{t-1}))$ 
and $(x_{\leq t},y_{\leq t}) = ((x_1,y_1),\ldots,(x_t,y_t))$.
For any $h \in \H$, $n \in \nats$, and $\mathbf{x} = (x_1,\ldots,x_n) \in \X^n$, denote by $h(\mathbf{x}) = (h(x_1),\ldots,h(x_n))$.
For any $V \subseteq \H$, $n \in \nats$, and any sequence $(\mathbf{x},\mathbf{y}) = ((x_1,y_1),\ldots,(x_n,y_n)) \in (\X \times \Y)^n$,
denote by $V_{(\mathbf{x},\mathbf{y})} = \{ h \in V : h(\mathbf{x}) = \mathbf{y} \}$, called a \emph{version space} of $V$.
We say a sequence $(\mathbf{x},\mathbf{y}) \in (\X \times \Y)^n$ is \emph{realizable} by $\H$ if $\H_{(\mathbf{x},\mathbf{y})} \neq \emptyset$: that is, $\exists h \in \H$ with $h(\mathbf{x})=\mathbf{y}$.
For any value $\alpha \in \reals$, denote by $\log(\alpha) = \ln(\max\{\alpha,e\})$.

The following is the primary definition of dimension central to this work: namely, the \emph{Littlestone dimension} $\LD(\H)$ of a concept class $\H$.
The definition is due to 
\citet{daniely:11,daniely:15}, 
representing the natural generalization of the classic definition of \citet*{littlestone:88} 
(for binary classification) 
to multiclass.

We first state a clear intuitive definition
in terms of binary trees.
Specifically, a Littlestone tree $\Tree$ is a rooted binary tree, where each internal node is labeled by an instance $x \in \X$, and each edge between that node and a child is labeled by $(x,y)$ for a class label $y \in \Y$, with the restriction that if the node has two children, then the corresponding class labels $y,y'$ must be distinct: i.e., if the edges are labeled $(x,y),(x,y')$, respectively, then $y \neq y'$.  
A finite-depth Littlestone tree $\Tree$ is \emph{shattered} by $\H$ if, for every leaf node of any depth $d$,
the sequence $(\mathbf{x},\mathbf{y})$ of labels of the edges along the path from the root to this leaf is realizable by $\H$.
The \emph{Littlestone dimension}, $\LD(\H)$, is then the maximum depth of a \emph{perfect} Littlestone tree shattered by $\H$, 
where the term \emph{perfect} means that every node has $2$ children and every leaf has equal depth.
If there are arbitrarily large depths $d$ for which there exist perfect Littlestone trees shattered by $\H$, then $\LD(\H)=\infty$.

We state this definition formally (giving notation to the labels of nodes and edges) as follows.

\begin{definition}
\label{defn:littlestone-dim}
The \emph{Littlestone dimension} of $\H$, 
denoted $\LD(\H)$, 
is defined as the largest $n \in \nats \cup \{0\}$ 
for which $\exists \{ (x_{\mathbf{b}},y_{(\mathbf{b},0)},y_{(\mathbf{b},1)}) : \mathbf{b} \in \{0,1\}^{t}, t \in \{0,\ldots,n-1\} \} \subseteq \X \times \Y^2$ 
(interpreting $\{0,1\}^0 = \{()\}$) 
with the property that $\forall b_1,\ldots,b_n \in \{0,1\}$, 
$\exists h \in \H$ with \[ 
h(x_{b_{\subsub{< 1}}},x_{b_{\subsub{< 2}}},\ldots,x_{b_{\subsub{< n}}}) = (y_{b_{\subsub{\leq 1}}},y_{b_{\subsub{\leq 2}}},\ldots,y_{b_{\subsub{\leq n}}}).
\]
If no such largest $n$ exists, define $\LD(\H) = \infty$.
Also define $\LD(\emptyset) = -1$.  

When $\LD(\H) < \infty$, 
one can show that $\LD(\H)$ can equivalently be defined 
inductively as 
\[
\max_x \max_{y_0 \neq y_1} \min_{i \in \{0,1\}} \LD(\H_{(x,y_i)})+1.
\]
\end{definition}

To see the correspondence between Definition~\ref{defn:littlestone-dim} and the definition in terms of shattered Littlestone trees, we take the nodes at depth $t \geq 0$ (counting the root as depth $0$) to be labeled $x_{b_{\leq t}}$, 
with the edge connecting to its left child labeled $(x_{b_{\subsub{\leq t}}},y_{(b_{\subsub{\leq t}},0)})$ 
and the edge connecting to its right child labeled $(x_{b_{\subsub{\leq t}}},y_{(b_{\subsub{\leq t}},1)})$.

\subsection{Background}
\label{sec:realizable} 

It was shown by \citet*{daniely:11,daniely:15} that $\H$ is online learnable in the realizable case if and only if $\LD(\H) < \infty$.
That is, there is an algorithm guaranteeing a bounded number of mistakes on all 
$(X_1,Y_1),\ldots,(X_T,Y_T)$ realizable by $\H$.
Specifically, they employ a natural multiclass generalization of Littlestone's \emph{Standard Optimal Algorithm} (SOA), defined as follows.
For any $(X_1,Y_1),\ldots,(X_{t-1},Y_{t-1}),X_t$, define
\begin{equation*}
\SOA(X_{< t},Y_{< t},X_t) = \argmax_{y \in \Y} \LD(\H_{((X_{< t},Y_{< t}),(X_t,y))}),
\end{equation*}
where ties are broken arbitrarily.
In other words, it (deterministically) predicts a label $\hat{y}_t$ which maximizes the Littlestone dimension of the version space constrained by that label.  Following the classic argument of \citet*{littlestone:88} for binary classification, \citet{daniely:11,daniely:15} showed that $\SOA$ makes at most $\LD(\H)$ mistakes on any sequence $(X_1,Y_1),\ldots,(X_T,Y_T)$ realizable by $\H$: that is, 
\begin{equation*}
\sum_{t=1}^{T} \ind[ \SOA(X_{< t},Y_{< t},X_t) \neq Y_t] \leq \LD(\H).
\end{equation*}
The reason is clear from the inductive version of the definition of $\LD(\H)$.
When $\LD(\H) < \infty$, 
for any $V \subseteq \H$ and $x \in \X$, at most one label $y$ can have $\LD(V_{(x,y)}) = \LD(V)$, and hence (since $\LD$ is integer-valued) every other $y'$ must have $\LD(V_{(x,y')}) \leq \LD(V)-1$.
Thus, for $V = \H_{(X_{< t},Y_{< t})}$, by predicting the label $\hat{y}_t$ with maximum $\LD(\H_{((X_{< t},Y_{< t}),(X_t,\hat{y}_t))})$, 
we are guaranteed that if $\hat{y}_t \neq Y_t$, then $Y_t$ cannot have 
$\LD(\H_{(X_{\leq t},Y_{\leq t})}) = \LD(\H_{(X_{< t},Y_{< t})})$.
That is, every mistake guarantees that the version space $\H_{(X_{\leq t},Y_{\leq t})}$ has a smaller Littlestone dimension (by at least $1$) than $\H_{(X_{< t},Y_{< t})}$.
Since $(X_1,Y_1),\ldots,(X_T,Y_T)$ is realizable by $\H$, we always have $\H_{(X_{\leq t},Y_{\leq t})} \neq \emptyset$, and hence $\LD(\H_{(X_{\leq t},Y_{\leq t})}) \geq 0$, 
so that we can have $\hat{y}_t \neq Y_t$ at most $\LD(\H)$ times.

Again following \citet*{littlestone:88}, the work of \citet*{daniely:11,daniely:15} also shows a lower bound, establishing that for any deterministic online learning algorithm, there exists a realizable sequence where it makes at least $\LD(\H)$ mistakes, so that $\SOA$ is optimal in this regard among all deterministic online learning algorithms.  Moreover, even for any randomized learning algorithm, there always exists a sequence realizable by $\H$ for which the \emph{expected} number of mistakes is at least $\LD(\H)/2$.  Thus, there exists an algorithm guaranteeing a bounded number of mistakes on all realizable sequences if and only if $\LD(\H) < \infty$: that is, $\H$ is online learnable in the realizable case iff $\LD(\H) < \infty$.


In the case of \emph{agnostic} online learning for multiclass classification, 
\citet{daniely:15} establish a lower bound (for $T \geq \LD(\H)$), 
$\regret(\alg,T) = \Omega(\sqrt{\LD(\H) T})$,
holding for any algorithm $\alg$.
Moreover, in the case of $|\Y| < \infty$, 
they propose a learning algorithm $\alg$ 
guaranteeing 
$\regret(\alg,T) = O\!\left(\sqrt{\LD(\H)T\log(T|\Y|)}\right)$.
Thus, in the case of bounded label spaces, 
the Littlestone dimension characterizes agnostic learnability.
They left open the question of whether this remains 
true for unbounded label spaces.
Note that the dependence on $|\Y|$ in the above 
regret bound makes the bound vacuous for infinite $\Y$ spaces.
We resolve this question (positively) in the present work (Theorem~\ref{thm:main}), and refine the above finite label bound as well (Theorem~\ref{thm:L-vs-SG}).


\section{Agnostic Online Learnability of Littlestone Classes}
\label{sec:learnability}

This section presents the main result and algorithm in detail.  
As one of the main components of the algorithm, we make use of the following classic result for learning 
from expert advice  
\citep*[e.g.,][]{vovk:90,vovk:92,littlestone:94};
see Theorem 2.2 of \citet*{cesa-bianchi:06}.

\begin{lemma}
\label{lem:experts}
\citep*[][Theorem 2.2]{cesa-bianchi:06} 
For any $N,T \in \nats$ and an array of values 
$e_{i,t} \in \{0,1\}$, with $i \in \{1,\ldots,N\}$, $t \in \{1,\ldots,T\}$, 
letting $\eta = \sqrt{(8/T)\ln(N)}$, 
for any $y_1,\ldots,y_T \in \{0,1\}$, 
letting $w_{i,1}=1$ and 
$w_{i,t} = e^{- \eta \sum_{s < t} \ind[e_{i,s} \neq y_s]}$ for each $t \in \{2,\ldots,T\}$ and $i \in \{1,\ldots,N\}$, 
letting $p_t = \sum_{i=1}^{N} w_{i,t} e_{i,t} / \sum_{i'=1}^{N} w_{i',t}$, 
it holds that 
\begin{equation*}
\sum_{t = 1}^{T} \left| p_t - y_t \right| - \min_{1 \leq i \leq N} \sum_{t=1}^{T} \ind[ e_{i,t} \neq y_t ] \leq \sqrt{(T/2)\ln(N)}.
\end{equation*}
\end{lemma}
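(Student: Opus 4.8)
The plan is to run the standard potential-function analysis of the exponentially weighted average forecaster. Write $\ell_{i,t} = \ind[e_{i,t}\neq y_t]$ for the loss of expert $i$ at round $t$ and $W_t = \sum_{i=1}^N w_{i,t}$ for the total weight, so that $W_1 = N$. The first step is the elementary but crucial observation that, because $y_t \in \{0,1\}$ (and $p_t$ is a convex combination of the $e_{i,t}$, hence lies in $[0,1]$), the forecaster's instantaneous loss is exactly the weight-averaged expert loss: checking the cases $y_t=0$ and $y_t=1$ separately,
\[
|p_t - y_t| \;=\; \sum_{i=1}^N \frac{w_{i,t}}{W_t}\,|e_{i,t}-y_t| \;=\; \sum_{i=1}^N \frac{w_{i,t}}{W_t}\,\ell_{i,t}.
\]

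Next I would track the evolution of the potential $W_t$. Since $w_{i,t+1} = w_{i,t}\,e^{-\eta \ell_{i,t}}$, we have $W_{t+1}/W_t = \sum_i (w_{i,t}/W_t)\,e^{-\eta \ell_{i,t}}$, which is the moment generating function, evaluated at $-\eta$, of the $\{0,1\}$-valued random variable equal to $\ell_{i,t}$ with probability $w_{i,t}/W_t$. Applying Hoeffding's lemma (for $X$ valued in $[0,1]$, $\ln \E e^{-\eta X} \le -\eta\,\E X + \eta^2/8$) together with the displayed identity gives
\[
\ln \frac{W_{t+1}}{W_t} \;\le\; -\eta\,|p_t - y_t| + \frac{\eta^2}{8},
\]
and telescoping over $t=1,\dots,T$ yields $\ln(W_{T+1}/W_1) \le -\eta\sum_{t=1}^T |p_t-y_t| + T\eta^2/8$.

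For the matching lower bound on $W_{T+1}$, for any fixed expert $k$ keep only the $k$-th term: $W_{T+1} \ge w_{k,T+1} = \exp\!\big(-\eta \sum_{t=1}^T \ell_{k,t}\big)$, so $\ln W_{T+1} \ge -\eta\sum_{t=1}^T \ind[e_{k,t}\neq y_t]$. Combining this with $\ln W_1 = \ln N$ and the telescoped upper bound, and rearranging, gives for every $k$
\[
\sum_{t=1}^T |p_t - y_t| - \sum_{t=1}^T \ind[e_{k,t}\neq y_t] \;\le\; \frac{\ln N}{\eta} + \frac{T\eta}{8}.
\]
Minimizing the right-hand side over $k$ is vacuous (it does not depend on $k$), so one may take the minimum over $k$ on the left; substituting $\eta = \sqrt{(8/T)\ln N}$ equalizes the two terms on the right, each becoming $\tfrac12\sqrt{(T/2)\ln N}$, for a total of $\sqrt{(T/2)\ln N}$, which is exactly the claimed bound. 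I do not anticipate a genuine obstacle: the only nonroutine ingredient is Hoeffding's lemma, and the single point deserving care is the first-step identity, which uses that $y_t$ is $\{0,1\}$-valued and that $p_t$ is the weighted mean of the $e_{i,t}$.
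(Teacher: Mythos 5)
Your proof is correct and is precisely the standard potential-function analysis of the exponentially weighted forecaster. The paper does not give its own proof of this lemma; it cites \citet[Theorem 2.2]{cesa-bianchi:06}, whose argument is exactly yours (there the first step is an inequality via convexity of the loss, which you sharpen to an exact identity in the $\{0,1\}$-valued case — a harmless specialization), so your approach matches the intended one.
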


We are now ready to describe our agnostic online learning algorithm, denoted by $\alg_{\mathrm{AG}}$, specified based on a given time horizon $T \in \nats$.
As above, let $(X_1,Y_1),\ldots,(X_T,Y_T)$ denote the data sequence.
For any $J \subseteq \{1,\ldots,T\}$, denote by $X_{J} = \{X_t : t \in J\}$ and $Y_J = \{Y_t : t \in J\}$, 
and for any $t \in \nats$, 
denote by $J_{< t} = J \cap \{1,\ldots,t-1\}$.
We consider a set of \emph{experts} defined as follows.
Let $\J = \{ J \subseteq \{1,\ldots,T\} : |J| \leq \LD(\H) \}$.
For each $J \in \J$, 
for each $t \in \nats$, 
define 
\begin{equation*} 
g^{J}_t = \SOA(X_{J_{\subsub{< t}}},Y_{J_{\subsub{< t}}},X_t).
\end{equation*}
That is, $g^{J}_t$ is the prediction the SOA would make on $X_t$, given that its previous sequence of examples were $(X_{J_{\subsub{< t}}},Y_{J_{\subsub{< t}}})$: namely, 
$\argmax_{y} \LD(\H_{((X_{J_{\subsub{< t}}},Y_{J_{\subsub{< t}}}),(X_t,y))})$.
Based on the set of experts $\{g^{J} : J \in \J\}$, 
we apply the \emph{multiplicative weights} algorithm.
Explicitly, letting $\eta = \sqrt{(8/T)\ln(|\J|)}$, 
and defining $w_{J,1}=1$ and $w_{J,t} = e^{-\eta \sum_{s < t} \ind[ g^{J}_s \neq Y_s ]}$ for $t \in \{2,\ldots,T\}$ and $J \in \J$, 
we define the prediction at time $t$ as 
\begin{equation*}
p_t = \frac{\sum_{J \in \J} w_{J,t} g^{J}_t}{\sum_{J' \in \J} w_{J',t}}. 
\end{equation*}
Note that the values $g^{J}_t$ of the experts at time $t$ only depend on $X_{< t},Y_{< t},X_t$, so that this is a valid prediction.
We have the following result, representing the main theorem of this work.  In particular, in conjunction with the necessity of $\LD(\H) < \infty$ for online learnability, established by \citet*{daniely:15}, our Theorem~\ref{thm:main} immediately follows from this. 

\begin{theorem}
\label{thm:regret-upper-bound}
For any concept class $\H$ and $T \geq 2\LD(\H)$, 
the algorithm $\alg_{\mathrm{AG}}$ defined above satisfies
\begin{equation*}
\regret(\alg_{\mathrm{AG}},T) = O\!\left( \sqrt{\LD(\H) T \log\!\left(\frac{T}{\LD(\H)}\right)} \right).
\end{equation*}
\end{theorem}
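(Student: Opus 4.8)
The plan is to combine the prediction-with-experts bound of Lemma~\ref{lem:experts} with a combinatorial claim: that at least one expert $g^{J}$, $J \in \J$, makes at most $\LD(\H)$ more mistakes than the best $h \in \H$. We may assume $\LD(\H) < \infty$, since otherwise $\J$ consists of all subsets of $\{1,\ldots,T\}$ and the asserted bound is vacuously true. First I would recast the multiplicative-weights update of $\alg_{\mathrm{AG}}$ into the binary setting of Lemma~\ref{lem:experts}: for each $J \in \J$ and each $t$ set $e_{J,t} = \ind[g^{J}_t \neq Y_t] \in \{0,1\}$ and $y_t = 0$. With this choice the learning rate $\eta$ and the weights $w_{J,t}$ in $\alg_{\mathrm{AG}}$ coincide exactly with those in the lemma; the lemma's aggregate prediction equals $\sum_{J : g^{J}_t \neq Y_t} w_{J,t}/\sum_{J'} w_{J',t} = 1 - p_t(Y_t)$, which is precisely the loss $\alg_{\mathrm{AG}}$ incurs on round $t$, and $\lvert p_t^{\mathrm{Lem}} - y_t\rvert$ equals this same quantity. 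Hence Lemma~\ref{lem:experts} gives, for every data sequence,
\[
\sum_{t=1}^{T}\bigl(1 - p_t(Y_t)\bigr) \;-\; \min_{J \in \J}\sum_{t=1}^{T}\ind[g^{J}_t \neq Y_t] \;\le\; \sqrt{(T/2)\ln\lvert\J\rvert}.
\]

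Next I would prove the combinatorial claim: $\min_{J \in \J}\sum_{t=1}^{T}\ind[g^{J}_t \neq Y_t] \le \LD(\H) + \min_{h \in \H}\sum_{t=1}^{T}\ind[h(X_t)\neq Y_t]$. Fix a maximum-cardinality $R \subseteq \{1,\ldots,T\}$ such that $(X_R,Y_R)$ is realizable by $\H$, and write $R^{c} = \{1,\ldots,T\}\setminus R$. Since for every $h \in \H$ the index set $\{t : h(X_t) = Y_t\}$ is realizable, maximality of $R$ gives $\lvert R^{c}\rvert = \min_{h\in\H}\sum_{t}\ind[h(X_t)\neq Y_t]$. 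I then construct $J$ by one left-to-right sweep through $R$: start with $J = \emptyset$; for $t \in R$ in increasing order, if $\SOA(X_{J},Y_{J},X_t) \neq Y_t$ then add $t$ to $J$. Because indices are added in increasing order, when index $t$ is examined the current set $J$ is exactly $J_{<t}$ for the final $J$; hence $g^{J}$ errs at $t \in R$ iff $t \in J$, so $g^{J}$ makes exactly $\lvert J\rvert$ mistakes on $R$. Moreover, every time an index is appended, the $\SOA$ prediction there was wrong, so by the Littlestone-dimension bookkeeping recalled in Section~\ref{sec:realizable} the version space $\H_{(X_{J},Y_{J})}$ loses at least one unit of Littlestone dimension; since $J \subseteq R$ and $R$ is realizable, this version space is always nonempty, so its dimension stays $\ge 0$, forcing $\lvert J\rvert \le \LD(\H)$ and thus $J \in \J$. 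Finally $g^{J}$ makes at most $\lvert R^{c}\rvert$ mistakes on indices outside $R$, hence at most $\LD(\H) + \lvert R^{c}\rvert = \LD(\H) + \min_{h}\sum_t\ind[h(X_t)\neq Y_t]$ mistakes in total, which is the claim.

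Chaining the claim into the displayed inequality and taking the supremum over sequences yields $\regret(\alg_{\mathrm{AG}},T) \le \LD(\H) + \sqrt{(T/2)\ln\lvert\J\rvert}$. It then remains to estimate $\lvert\J\rvert = \sum_{k=0}^{\LD(\H)}\binom{T}{k}$; the standard bound $\sum_{k\le d}\binom{T}{k}\le (eT/d)^{d}$ (the case $\LD(\H)=0$ being trivial and giving regret $0$) gives $\ln\lvert\J\rvert \le \LD(\H)\bigl(1+\ln(T/\LD(\H))\bigr) \le 2\,\LD(\H)\log(T/\LD(\H))$, using $T \ge 2\LD(\H)$. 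Since also $\LD(\H) \le \sqrt{\LD(\H)\,T\,\log(T/\LD(\H))}$, we conclude $\regret(\alg_{\mathrm{AG}},T) = O\!\bigl(\sqrt{\LD(\H)\,T\,\log(T/\LD(\H))}\bigr)$.

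The main obstacle is the combinatorial claim in the second paragraph: one must argue carefully that the greedy left-to-right construction of $J$ genuinely replays the fixed expert $g^{J}$ (the prefix-consistency observation) and that the Littlestone-dimension decrease is valid at each appended index — which hinges on every prefix of $J$ being a subset of the realizable set $R$, keeping the relevant version spaces nonempty so the dimension cannot fall below $0$.
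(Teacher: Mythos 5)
Your proof is correct and follows essentially the same approach as the paper: apply the experts bound of Lemma~\ref{lem:experts} to the experts $\{g^J\}_{J\in\J}$, and show that the specific $J$ obtained by running $\SOA$ with conservative updates on a realizable subsequence has at most $\LD(\H)$ excess mistakes, via the standard Littlestone-dimension decrement argument. The only cosmetic differences are that you take $R$ to be any maximum-cardinality realizable subset rather than the paper's $R^*=\{t:h^*(X_t)=Y_t\}$ (these have equal size, so the argument is unchanged), and you make the reduction to the binary setting of Lemma~\ref{lem:experts} fully explicit via $e_{J,t}=\ind[g^J_t\neq Y_t]$ and $y_t=0$, whereas the paper applies the lemma somewhat informally.
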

\begin{proof}
Let $h^* \in \H$ satisfy 
$\sum_{t=1}^{T} \ind[h^*(X_t) \neq Y_t] = \min_{h \in \H} \sum_{t=1}^{T} \ind[h(X_t) \neq Y_t]$.
Denote by $R^* = \{ t \in \{1,\ldots,T\} : h^*(X_t) = Y_t \}$, 
and note that 
the subsequence $(X_{R^*},Y_{R^*})$ is realizable by $\H$.
Define a sequence $j_{r}$ inductively, as follows. 
Let $j_1 = \min\{ t \in R^* : \SOA(\emptyset,\emptyset,X_t) \neq Y_t \}$ 
if it exists.
For $r \geq 2$, if $j_{r-1}$ is defined, 
let 
\begin{equation*} 
j_{r} = \min\{ t \in R^* : t > j_{t-1} \text{ and } \SOA(X_{j_{\subsub{< r}}},Y_{j_{\subsub{< r}}},X_t) \neq Y_t \}
\end{equation*}
if it exists.
Finally, define
\begin{equation*}
J^* = \{ j_r : r \in \nats \text{ and } j_r \text{ exists} \}.
\end{equation*}
In other words, $J^*$ represents the sequence of mistakes $\SOA$ would make on the sequence $R^*$ using conservative updates: that is, only adding an example $(X_t,Y_t)$ to its history if it is a mistake point.

In particular, note that if $J^* = \emptyset$, 
then every $t \in R^*$ satisfies 
$g^{J^*}_t = \SOA(X_{J^*_{\subsub{< t}}},Y_{J^*_{\subsub{< t}}},X_t) = \SOA(\emptyset,\emptyset,X_t)=Y_t$.
If $J^* \neq \emptyset$, then 
any $t \in R^*_{< j_{\subsub{1}}}$ 
has $g^{J^*}_t = \SOA(X_{J^*_{\subsub{< t}}},Y_{J^*_{\subsub{< t}}},X_t) = \SOA(\emptyset,\emptyset,X_t)=Y_t$; 
similarly, any $r \in \{1,\ldots, |J^*|-1\}$
and $t \in R^*_{< j_{\subsub{r+1}}} \setminus R^*_{\leq j_{\subsub{r}}}$ has $g^{J^*}_t = \SOA(X_{J^*_{\subsub{< t}}},Y_{J^*_{\subsub{< t}}},X_t) = \SOA(X_{j_{\subsub{< r+1}}},Y_{j_{\subsub{< r+1}}},X_t)=Y_t$;
also, for $r = |J^*|$ (the largest $r$ for which $j_r$ is defined),  
any $t \in R^* \setminus R^*_{\leq j_{\subsub{r}}}$ 
has $g^{J^*}_t = \SOA(X_{J^*_{\subsub{< t}}},Y_{J^*_{\subsub{< t}}},X_t) = \SOA(X_{j_{\subsub{< r+1}}},Y_{j_{\subsub{< r+1}}},X_t)=Y_t$.
In other words, $g^{J^*}_t = Y_t$ for every $t \in R^* \setminus J^*$.
Therefore, 
\begin{equation*}
\sum_{t=1}^{T} \ind[ g^{J^*}_t \neq Y_t ] \leq |J^*|+(T-|R^*|) = |J^*| + \min_{h \in \H} \sum_{t=1}^{T} \ind[h(X_t) \neq Y_t].
\end{equation*}

Moreover, since $(X_{R^*},Y_{R^*})$ is realizable by $\H$, and $J^* \subseteq R^*$, the subsequence $(X_{J^*},Y_{J^*})$ is also realizable by $\H$.
By definition, $\SOA$ makes a mistake on every time when run through the subsequence $(X_{J^*},Y_{J^*})$: 
that is, $\SOA(X_{j_{\subsub{< r}}},Y_{j_{\subsub{< r}}},X_{j_{\subsub{r}}}) \neq Y_{j_{\subsub{r}}}$ for every $j_r \in J^*$.
Thus, by the guaranteed mistake bound $\LD(\H)$ for $\SOA$ on realizable sequences, we conclude that $|J^*| \leq \LD(\H)$.
In particular, this implies $J^* \in \J$.
Altogether, by Lemma~\ref{lem:experts}, we have that 
\begin{align*}
\sum_{t=1}^{T} |p_t - Y_t| 
& \leq \left( \sum_{t=1}^{T} \ind[ g^{J^*}_t \neq Y_t ] \right) + \sqrt{(T/2)\ln(|\J|)}
\\ & \leq \min_{h \in \H} \sum_{t=1}^{T} \ind[h(X_t) \neq Y_t] + \LD(\H) + \sqrt{(T/2)\LD(\H) \ln\!\left(\frac{e T}{\LD(\H)}\right)},
\end{align*}
\begin{align*}
\implies \regret(\alg_{\mathrm{AG}},T) & \leq \LD(\H) + \sqrt{(T/2)\LD(\H) \ln\!\left(\frac{e T}{\LD(\H)}\right)} = O\!\left(\sqrt{\LD(\H) T \log\!\left(\frac{T}{\LD(\H)}\right)}\right).
\end{align*}
{\vskip -7mm}
\end{proof}

\section{Online Uniform Convergence Versus Online Learnability}
\label{sec:adv-ulln}

In PAC learning for binary classification, 
for a given data distribution, 
a class $\H$ satisfies the 
uniform law of large numbers (i.e., is $P$-Glivenko-Cantelli)
if and only if the (normalized) 
Rademacher complexity converges to $0$ in sample size.
Moreover, the rate of uniform convergence 
is dominated by a converging distribution-free
function of sample size if and only if 
the VC dimension is finite.
By a chaining argument \citep*{talagrand:94,van-der-Vaart:96},
the optimal form for this rate is $\Theta(\sqrt{\mathrm{VC}(\H)/n})$, 
where $n$ is the sample size and $\mathrm{VC}(\H)$ is the VC dimension of $\H$.

Similarly, in adversarial online learning for binary classification, 
\citet{alon:21} showed that a class $\H$ 
satisfies an \emph{adversarial} uniform law of large numbers 
if and only if the (normalized) \emph{sequential} Rademacher complexity converges to $0$ in the sequence length.\footnote{\citet*{rakhlin:10,rakhlin2015online,rakhlin:15b} also studied a notion of sequential uniform law of large numbers.  Though formulated somewhat differently, that notion was also shown to be equivalent to convergence of sequential Rademacher complexity to $0$, and hence is satisfied iff the notion of AULLN studied by \citet*{alon:21} is satisfied.}
The rate of adversarial uniform convergence, 
uniform over sequences, is then controlled by the 
Littlestone dimension.
Again by a chaining argument \citep{alon:21}, 
the optimal form for this rate is 
$\Theta(\sqrt{\LD(\H) T})$.\footnote{This result of \citet{alon:21} was in fact only shown under a further technical restriction on $\H$ (rooted in the work of \citealp{rakhlin:10,rakhlin2015online}), 
so that the rounds of the online learning game satisfy the \emph{minimax theorem}.  As the results in this section are based on this result of \citet{alon:21}, we also suppose this condition is appropriately satisfied.  It remains open whether this $\sqrt{\LD(\H) T}$ regret for binary classification, and consequently our theorems below, remain valid without any restrictions on $\H$.}

Importantly, for both of these facts, 
the complexity measure controlling the 
rate of uniform convergence is the same as 
the complexity measure that determines learnability: 
for PAC learning, the VC dimension, 
and for online learning, the Littlestone dimension.
In particular, together with separately established lower bounds for learning, 
these facts imply that the optimal rate of convergence of expected excess error in agnostic PAC learning is 
$\Theta(\sqrt{\mathrm{VC}(\H)/n})$ \citep*{talagrand:94},
whereas the optimal regret bound for agnostic online learning is $\Theta(\sqrt{\LD(\H) T})$
\citep{alon:21}.

In the case of PAC learning 
for \emph{multiclass} classification, 
it again holds that the classification losses 
satisfy the uniform law of large numbers 
if and only if the Rademacher complexity 
converges to $0$ in sample size.
However, in this case, there exists a distribution-free bound on the rate of uniform convergence 
if and only if the \emph{graph dimension} is finite
\citep{ben-david:95,daniely:11,daniely:15}, 
and the optimal such bound is $\Theta(\sqrt{d_{G}(\H)/n})$, 
where $n$ is the sample size and $d_{G}(\H)$ is the graph dimension.
Notably, the graph dimension does \emph{not} 
control PAC learnability of the class \citep{Natarajan88up,daniely:11,daniely:15}; 
rather, a recent result of \cite{brukhim:22}
established that multiclass PAC learnability (including agnostic learnability) 
is controlled by a quantity they call the 
\emph{DS dimension} (originally proposed by \citealp*{daniely:14}, who proved it provides a lower bound), 
and there are simple examples where the DS dimension is finite while the graph dimension is infinite 
(necessarily with an infinite number of possible class labels).  
Thus, in the case of multiclass classification, 
we see that the uniform law of large numbers, 
and PAC learnability, are controlled by \emph{different} parameters of the class, 
and there are PAC learnable classes which do not satisfy the uniform law of large numbers.
Thus, PAC learning algorithms generally cannot 
rely on a uniform law of large numbers, 
in contrast to binary classification.

In this section, we note an analogous result holds for the \emph{adversarial} uniform law of large numbers.
We find that, while the adversarial uniform law of large numbers is again satisfied if and only if the sequential Rademacher complexity converges to $0$ in sequence length, 
the optimal sequence-independent bound on the convergence depends on the \emph{sequential graph dimension}.  Thus, in light of our Theorem~\ref{thm:main}, establishing that agnostic online learnability is controlled by the \emph{Littlestone dimension}, 
we again see that the parameter controlling 
the adversarial uniform law of large numbers 
differs from the parameter controlling 
online learnability (including agnostic learnability).
Moreover, we provide a simple example where 
the sequential graph dimension is infinite while 
the Littlestone dimension is finite, 
thus showing that not all online learnable classes 
satisfy the AULLN. 

We begin by recalling the following definitions of \citet{alon:21}.
An adversarially uniform law of large numbers (AULLN) can be viewed as a sequential game between a \emph{sampler} $\S$ and an adversary.  
On each round $t$, the adversary chooses $(x_t,y_t) \in \X \times \Y$, and the sampler decides whether to include $(x_t,y_t)$ in its ``sample'': a subsequence $K$.  The sampler may be randomized, and the adversary may observe and adapt to the sampler's past decisions (though not its internal random bits).

\begin{definition}[\citealp{alon:21}]
\label{def:adv-ulln}
A concept class $\H$ satisfies the adversarial uniform law of large numbers (AULLN) if, for any $\epsilon,\delta \in (0,1)$, there exists $k(\epsilon,\delta) \in \nats$ and a sampler $\S$ such that, for any adversarially-produced sequence $(\bar{x},\bar{y})=\{(x_t,y_t)\}_{t=1}^{T}$ of any length $T$, the sample $K$ selected by $\S$ always satisfies $|K| \leq k(\epsilon,\delta)$, 
and with probability at least $1-\delta$, 
$K$ forms an $\epsilon$-approximation of $(\bar{x},\bar{y})$ (with respect to $\H$), meaning\footnote{To be clear, the sum over $(x_t,y_t) \in K$ treats duplicates as distinct: that is, there are $|K|$ terms in the sum.}
\begin{equation*}
\sup_{h \in \H} \left| \frac{1}{|K|}\sum_{(x_t,y_t) \in K} \ind[h(x_t) \neq y_t] - \frac{1}{T}\sum_{t=1}^{T} \ind[h(x_t) \neq y_t] \right| \leq \epsilon.
\end{equation*}
\end{definition}

The AULLN was shown by \citet*{alon:21} to be intimately related to the \emph{sequential Rademacher complexity} of the class of indicator functions, in this case, $(x,y) \mapsto \ind[h(x) \neq y]$, $h \in \H$.
Formally, we recall the definition of sequential Rademacher complexity from the work of \citet*{rakhlin:10,rakhlin2015online}.
Let $\epsilon = \{\epsilon_t\}_{t \in \nats}$ be i.i.d.\ $\mathrm{Uniform}(\{-1,1\})$ random variables.
Let $z = \{(x_t,y_t)\}_{t \in \nats}$ 
be any sequence of functions $(x_t,y_t) : \{-1,1\}^{t-1} \to \X \times \Y$, 
denoting by $(x_t(\epsilon_{< t}),y_t(\epsilon_{< t}))$ the value of this function on $\epsilon_{< t}$.
For any $T \in \nats$, the sequential Rademacher complexity is defined as 
\begin{equation*}
\mathrm{Rad}_{T}(\H) = \sup_{z} \E\!\left[ \sup_{h \in \H} \frac{1}{T} \sum_{t=1}^{T} \epsilon_t \ind\!\left[h(x_t(\epsilon_{< t})) \neq y_t(\epsilon_{< t})\right] \right].
\end{equation*}

\citet*{alon:21} proved that, for binary classification, the best achievable bound on the sequential Rademacher complexity is the Littlestone dimension of $\H$.  However, binary classifiation enjoys a special property that composition of $\H$ with the $0$-$1$ loss does not change the complexity.  In contrast, in multiclass classification, the Littlestone dimension of $\H$ composed with the $0$-$1$ loss is a different quantity, which we term the \emph{sequential graph dimension}:

\begin{definition}
\label{def:sequential-graph-dim}
The sequential graph dimension of $\H$, denoted $\SG(\H)$, is defined as 
\begin{equation*} 
\SG(\H) = \LD(\{ (x,y) \mapsto \ind[ h(x) \neq y ] : h \in \H \}).
\end{equation*}
Explicitly, $\SG(\H)$ is 
the largest $n \in \nats \cup \{0\}$ 
s.t.\ $\exists \{ (x_{\mathbf{b}},y_{\mathbf{b}}) : \mathbf{b} \in \{0,1\}^{t}, t \in \{0,\ldots,n-1\} \} \subseteq \X \times \Y$ 
with the property that $\forall b_1,\ldots,b_n \in \{0,1\}$, 
$\exists h \in \H$ with \[ 
(\ind[h(x_{b_{\subsub{< 1}}}) \neq y_{b_{\subsub{< 1}}}],\ind[h(x_{b_{\subsub{< 2}}}) \neq y_{b_{\subsub{< 2}}}],\ldots,\ind[h(x_{b_{\subsub{< n}}}) \neq y_{b_{\subsub{< n}}}]) = (b_1,\ldots,b_n).
\]
If no such largest $n$ exists, define $\SG(\H) = \infty$.
Also define $\SG(\emptyset) = -1$.  
\end{definition}

Here we state an extension of the result of \citet*{alon:21} for AULLN to the multiclass setting.  The result follows immediately by applying Theorems 2.2 and 2.3 of \citet*{alon:21arXiv} to the class of binary functions $\G = \{(x,y) \mapsto \ind[h(x) \neq y] : h \in \H \}$, 
along with (for the final claim about $\mathrm{Rad}_T(\H)$) the analogous application to $\G$ of Corollary 12 of \citet*{rakhlin2015online} and the upper bound on sequential Rademacher complexity in the proof of Theorem 12.1 of \citet{alon:21arXiv}.

\begin{theorem}
\label{thm:AULLN}
For any concept class $\H$, the following are equivalent:
\begin{enumerate}
    \item $\H$ satisfies the adversarial uniform law of large numbers
    \item $\mathrm{Rad}_{T}(\H) \to 0$
    \item $\SG(\H) < \infty$.
\end{enumerate}
Moreover, if $\H$ satisfies AULLN, then Defintion~\ref{def:adv-ulln} is satisfied with 
$k(\epsilon,\delta) = O\!\left(\frac{\SG(\H)+\log(1/\delta)}{\epsilon^2} \right)$,
and the minimal achievable $k(\epsilon,\delta)$ satisfies 
$k(\epsilon,\delta) = \Omega\!\left(\frac{\SG(\H)}{\epsilon^2}\right)$.
Additionally, for $T \geq \SG(\H)$, it holds that
$\mathrm{Rad}_T(\H) = \Theta\!\left(\sqrt{\frac{\SG(\H)}{T}}\right)$.
\end{theorem}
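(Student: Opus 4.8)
The plan is to obtain everything by reduction to the binary-classification results of \citet{alon:21arXiv} and \citet{rakhlin2015online}, applied not to $\H$ itself but to the induced class of binary ``loss functions''
\[
\G = \{ (x,y) \mapsto \ind[h(x) \neq y] : h \in \H \} \subseteq \{0,1\}^{\X \times \Y}.
\]
The key observation is purely definitional: $\G$ is an ordinary binary concept class over the instance space $\X \times \Y$, and by Definition~\ref{def:sequential-graph-dim} we have $\LD(\G) = \SG(\H)$. Moreover, the error rate of $h \in \H$ on a sequence $(x_t,y_t)_{t=1}^T$ equals exactly the (classification) error rate of the corresponding $g \in \G$ on the sequence of points $((x_t,y_t), 1)_{t=1}^T$ (i.e., all labels set to $1$) — so an $\epsilon$-approximation of $(\bar x, \bar y)$ with respect to $\H$ in the sense of Definition~\ref{def:adv-ulln} is precisely an $\epsilon$-approximation of the corresponding sequence with respect to $\G$ in the sense of \citet{alon:21arXiv}. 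Likewise, the sequential Rademacher complexity $\mathrm{Rad}_T(\H)$ as defined above is verbatim the sequential Rademacher complexity of the function class $\G$ evaluated on $\X\times\Y$-valued trees.

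The main steps, in order. First I would verify the two identities just stated: $\LD(\G)=\SG(\H)$ (immediate from unwinding both definitions) and the correspondence between approximations/complexities for $\H$ and for $\G$. Second, I would invoke Theorems~2.2 and~2.3 of \citet{alon:21arXiv}, which for any binary class state that (AULLN for the class) $\iff$ (sequential Rademacher complexity $\to 0$) $\iff$ (Littlestone dimension finite), together with the quantitative sample-size bounds; applying these to $\G$ yields the equivalence of items (1)–(3) and the upper bound $k(\epsilon,\delta) = O\!\left(\frac{\SG(\H)+\log(1/\delta)}{\epsilon^2}\right)$ and lower bound $k(\epsilon,\delta) = \Omega\!\left(\frac{\SG(\H)}{\epsilon^2}\right)$. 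Third, for the final claim about $\mathrm{Rad}_T(\H)$: the matching lower bound $\mathrm{Rad}_T(\H) = \Omega(\sqrt{\SG(\H)/T})$ follows from Corollary~12 of \citet{rakhlin2015online} (a Littlestone-dimension lower bound on sequential Rademacher complexity) applied to $\G$, and the upper bound $\mathrm{Rad}_T(\H) = O(\sqrt{\SG(\H)/T})$ is exactly the sequential Rademacher upper bound derived inside the proof of Theorem~12.1 of \citet{alon:21arXiv} (a chaining/Dudley-type argument in terms of the Littlestone dimension), again applied to $\G$; combining gives $\mathrm{Rad}_T(\H) = \Theta(\sqrt{\SG(\H)/T})$ for $T \geq \SG(\H)$.

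There is no substantial new obstacle here — the theorem is stated precisely so that it ``follows immediately'' by black-box application of existing binary-classification results. The only point requiring care is the bookkeeping in the reduction: one must check that the sampler/adversary game defining AULLN for $\H$ (Definition~\ref{def:adv-ulln}) translates faithfully into the corresponding game for $\G$, including the measurability caveat that singletons $\{y\}$ are measurable (so that the events $\ind[h(x)\neq y]$ are well-defined and $\G$ is a legitimate class), and that the ``duplicates counted as distinct'' convention in Definition~\ref{def:adv-ulln} matches the convention in \citet{alon:21arXiv}. Granting these routine verifications, the proof is complete.
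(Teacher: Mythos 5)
Your proposal matches the paper's argument essentially verbatim: the paper likewise obtains Theorem~\ref{thm:AULLN} by applying Theorems~2.2 and~2.3 of \citet{alon:21arXiv}, Corollary~12 of \citet{rakhlin2015online}, and the sequential Rademacher upper bound from the proof of Theorem~12.1 of \citet{alon:21arXiv} to the binary loss class $\G = \{(x,y) \mapsto \ind[h(x) \neq y] : h \in \H\}$, using the definitional identity $\LD(\G) = \SG(\H)$. (One small remark: in \citet{alon:21arXiv} the AULLN concerns approximating the empirical mean of a boolean function, so your device of attaching artificial labels ``$1$'' to the points $(x_t,y_t)$ is unnecessary, though harmless.)
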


While Theorem~\ref{thm:AULLN} indeed expresses a fairly tight relation between AULLN, seq.\ Rademacher complexity, and the seq.\ graph dimension, 
the corresponding result of \citet{alon:21} for binary classification additionally found an equivalence to online learnability and agnostic online learnability.  
Since our Theorem~\ref{thm:main} establishes that agnostic online learnability is characterized by finiteness of the \emph{Littlestone dimension} (and this is true for realizable online learning as well), rather than the sequential graph dimension, we see a separation in multiclass classification between AULLN and online learnability.  To formally establish this separation, we present the following example, exhibiting a concept class with finite Littlestone dimension but infinite sequential graph dimension.

\paragraph{Example 1:} 
The following construction is identical to a known example of \citet{daniely:11,daniely:15}, originally constructed to show a class that is PAC learnable but has infinite (non-sequential) \emph{graph} dimension.  For completeness, we include the full details of the construction here.
Let $\X$ be a countable set, 
let $\Y = 2^{\X} \cup \{*\}$, 
and for each $A \subseteq \X$, define 
\begin{equation*} 
h_A(x) = \begin{cases}
A & \text{ if } x \in A\\
* & \text{ otherwise}
\end{cases}.
\end{equation*}
Define $\H = \{h_A : A \subseteq \X\}$.
For any $x \in \X$ and distinct $y_0,y_1 \in \Y$, it must be that one of $y_0,y_1$ is equal some $A \subseteq \X$.
Since only one $h \in \H$ could possibly have $h(x)=A$ (namely, $h_A$), and even then only if $x \in A$, we would have $\LD(\H_{(x,A)}) \in \{-1,0\}$.
It follows that $\LD(\H) \leq 1$.
It can be seen that $\LD(\H)=1$ by choosing $y_0 = *$ and $y_1 = A$ for any $A$ such that $x \in A$.
On the other hand, for any $n \in \nats$ and distinct $x_1,\ldots,x_n \in \X$,
for any $b_1,\ldots,b_n \in \{0,1\}$, 
letting $x_{b_{\subsub{< t}}} = x_t$ and $y_{b_{\subsub{< t}}} = *$ for each $t \leq n$, 
the function $h_A$, with $A = \{ x_t : b_t = 1 \}$, 
satisfies $\ind[ h_A(x_{b_{\subsub{< t}}}) \neq y_{b_{\subsub{< t}}} ] = b_t$
for every $t \in \{1,\ldots,n\}$.
Thus, $\SG(\H) \geq n$.
Since this is true of every $n \in \nats$, 
we see that $\SG(\H) = \infty$.
In particular, in light of Theorems~\ref{thm:main} and \ref{thm:AULLN}, 
this class $\H$ is agnostically online learnable (and realizable online learnable), but does not satisfy the adversarial uniform law of large numbers
(nor satisfy $\mathrm{Rad}_T(\H) \to 0$).
This is precisely analogous to the findings of \citet*{daniely:11,daniely:15} that this class is PAC learnable but does not satisfy the (non-adversarial) uniform law of large numbers.

\subsection{Finite Label Spaces}
\label{sec:finite-Y}

In regard to agnostic online learning in the case $|\Y| < \infty$, the best previous regret bound, due to \citet*{daniely:15}, is 
$O(\sqrt{\LD(\H)T\log(T|\Y|)})$.
Based on Theorem~\ref{thm:AULLN}, in this section, we improve this regret to $O(\sqrt{\SG(\H)T})$.
Further, we prove in Theorem~\ref{thm:L-vs-SG} that $\SG(\H) = O(\LD(\H)\log(|\Y|))$, 
so that this additionally implies a guarantee
$\regret(\alg,T)=O(\sqrt{\LD(\H)T\log(|\Y|)})$.
On the one hand, this improves over Theorem~\ref{thm:regret-upper-bound} by removing a factor $\sqrt{\log(T/\LD(\H))}$, 
but on the other hand, includes a factor $\sqrt{\log(|\Y|)}$ not present in Theorem~\ref{thm:main}.
In light of the lower bound $\Omega(\sqrt{\LD(\H)T})$
of \citet*{daniely:15} holding for any $\alg$, we see that this 
$O( \sqrt{\LD(\H) T \log(|\Y|)} )$
regret guarantee is optimal up to the $\sqrt{\log(|\Y|)}$ factor.
As stated in Open Problem~\ref{prob:opt}, 
it remains an open problem to determine whether the optimal regret is always of the form $\Theta(\sqrt{\LD(\H)T})$. Formally:


\begin{theorem}
\label{thm:L-vs-SG}
If $|\Y| < \infty$, for any concept class $\H$, 
$\SG(\H) = O\!\left( \LD(\H) \log(|\Y|) \right)$.
\end{theorem}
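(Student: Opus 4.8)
The plan is to pass to the binary ``loss class'' $\G=\{(x,y)\mapsto\ind[h(x)\neq y]:h\in\H\}$, so that $\SG(\H)=\LD(\G)$ by definition, and to bound $\LD(\G)$ by combining a multiclass Sauer--Shelah--Littlestone counting bound for $\H$ with a transformation that turns a deep tree shattered by $\G$ into a tree for $\H$ with exponentially many realizable branches. It suffices to prove that $\LD(\H)<\infty$ forces the stated quantitative bound on $\SG(\H)$ (if $|\Y|=1$ then $\LD(\H)=\SG(\H)=0$ and we are done, and if $\LD(\H)=0$ then $|\H|\le1$ so again $\SG(\H)=0$; so assume $|\Y|\ge2$ and $d:=\LD(\H)\ge1$). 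Fix any finite $n\le\LD(\G)=\SG(\H)$; by Definition~\ref{def:sequential-graph-dim} there are instance/label pairs $\{(x_{\mathbf b},y_{\mathbf b})\}$ indexed by the internal nodes $\mathbf b$ of a perfect binary tree of depth $n$, such that for every $\mathbf b\in\{0,1\}^n$ there is $h_{\mathbf b}\in\H$ with $\ind[h_{\mathbf b}(x_{b_{<t}})\neq y_{b_{<t}}]=b_t$ for all $t\le n$; equivalently $b_t=0$ says $h_{\mathbf b}(x_{b_{<t}})=y_{b_{<t}}$ and $b_t=1$ says $h_{\mathbf b}(x_{b_{<t}})\neq y_{b_{<t}}$.

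The heart of the argument is to build from this a depth-$n$ \emph{multiclass} Littlestone tree $\Tree'$ for $\H$ — one internal node per level, labeled by an instance, with one child per label of $\Y$ on the outgoing edges. At the root put the instance $x_{()}$; route the child labeled $y_{()}$ into (the recursive transformation of) the bit-$0$ subtree of the original tree, and route every other label into (the recursive transformation of) the bit-$1$ subtree; recurse. A root-to-leaf branch of $\Tree'$ is then a label sequence $(\lambda_1,\dots,\lambda_n)\in\Y^n$, and by construction the instance queried at depth $t$ along it is $x_{\mathbf c_{<t}}$, where $\mathbf c$ is the bit-string determined by $c_s=\ind[\lambda_s\neq y_{\mathbf c_{<s}}]$; the branch is realizable iff some $h\in\H$ has $h(x_{\mathbf c_{<t}})=\lambda_t$ for all $t$. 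Now, for any leaf $\mathbf b$ of the original tree, feed in the labels $\lambda_t:=h_{\mathbf b}(x_{b_{<t}})$: a short induction on $t$ using the witness property of $h_{\mathbf b}$ shows $\mathbf c=\mathbf b$, so this branch is realizable (via $h_{\mathbf b}$), and the map $\mathbf b\mapsto(\lambda_t)_t$ is injective since $\mathbf b$ is recovered from $(\lambda_t)_t$ as $\mathbf c$. Hence $\Tree'$ has at least $2^n$ realizable branches. The point of allowing \emph{all} labels as children — rather than committing a single concrete label on each ``disagreement'' branch, which would discard branches — is that this costs nothing beyond passing from the binary to the (slightly weaker) multiclass counting bound.

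Next apply the multiclass Sauer--Shelah--Littlestone lemma: a class of Littlestone dimension $d$ realizes at most $\sum_{i=0}^{d}\binom{n}{i}(|\Y|-1)^i$ root-to-leaf branches in any depth-$n$ multiclass Littlestone tree. This is proved exactly as in the binary case: at the root at most one label $y^\sharp$ can keep the version-space Littlestone dimension equal to $d$ while every other label drops it to $\le d-1$; applying the inductive bound to the $y^\sharp$-subtree (budget $d$) and to each of the at most $|\Y|-1$ other subtrees (budget $d-1$) and collapsing via Pascal's identity yields the claim. Combining with the previous step, whenever $n\ge 2\LD(\H)$ we get
\[
2^{n}\ \le\ \sum_{i=0}^{\LD(\H)}\binom{n}{i}(|\Y|-1)^i\ \le\ \bigl(\LD(\H)+1\bigr)\Bigl(\tfrac{e\,n\,|\Y|}{\LD(\H)}\Bigr)^{\LD(\H)}.
\]
Taking logarithms gives $n\le \LD(\H)\cdot O\!\bigl(\log(n|\Y|/\LD(\H))\bigr)$, and a routine manipulation of this inequality — splitting into the cases $n\le c\,\LD(\H)^2$ and $n>c\,\LD(\H)^2$ to remove the mild self-reference, and using $|\Y|\ge 2$ — yields $n=O\!\left(\LD(\H)\log|\Y|\right)$. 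Since this holds for every finite $n\le\SG(\H)$, we conclude $\SG(\H)=O(\LD(\H)\log|\Y|)$.

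I expect the second step — the tree transformation and the injectivity of $\mathbf b\mapsto(\lambda_t)_t$, which is what produces $2^n$ realizable branches (rather than only about $2^{\Omega(n/|\Y|)}$, as a naive commitment to concrete disagreement-labels would), and hence the logarithmic rather than linear dependence on $|\Y|$ — to be the main obstacle; the multiclass SSL lemma and the final numerical estimate are routine.
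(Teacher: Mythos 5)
Your proof is correct, but it follows a genuinely different route from the paper's. The paper (following the adaptive-experts technique of Ben-David, P\'al, and Shalev-Shwartz) defines a family of experts $(J,Y)$ — SOA runs with $\le \LD(\H)$ ``forced'' mistake positions and guessed labels — and runs a halving argument \emph{over this expert set} along an adversarially-built bit-string, showing that the number of experts consistent with the constructed branch halves at each step, while at least one expert (the one tracking the witness $h$) survives all $n$ rounds; counting experts gives $2^n \le \sum_{i\le \LD(\H)}\binom{n}{i}|\Y|^i$. You instead transform the $\G$-shattered tree into a $|\Y|$-ary multiclass Littlestone tree $\Tree'$ for $\H$ itself (routing the agreeing label to the $0$-subtree and all disagreeing labels to copies of the $1$-subtree), show via the injective map $\mathbf b\mapsto(h_{\mathbf b}(x_{b_{<t}}))_t$ that $\Tree'$ has at least $2^n$ realizable branches, and then cap the number of realizable branches by a $|\Y|$-ary Sauer--Shelah--Littlestone lemma, $2^n\le\sum_{i\le\LD(\H)}\binom{n}{i}(|\Y|-1)^i$. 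Both arguments hinge on the same underlying fact — in a version space, at most one label can preserve the Littlestone dimension — but yours packages it as an abstract counting lemma for $|\Y|$-ary trees applied to a transformed tree, whereas the paper packages it as a version-space-halving game over an explicit expert set; your route avoids the expert machinery and is marginally tighter ($(|\Y|-1)^i$ vs.\ $|\Y|^i$, which is asymptotically irrelevant), at the cost of proving the $|\Y|$-ary SSL lemma, which the paper sidesteps by enumerating experts directly. Your closing numerical step is fine (the case split you suggest is a bit heavier than needed: from $n\le Cd\log(en|\Y|/d)$ one can just set $m=n/d$, note $m\le C\log(em|\Y|)$, and conclude $m=O(\log|\Y|)$ by absorbing the $\log m$ term, exactly as the paper does via a cited lemma).
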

\begin{proof}
We follow a strategy of \emph{adaptive experts}, rooted in the work of \citet{ben2009agnostic} (and similar to the extension thereof used by \citealp*{daniely:15} in their multiclass agnostic online learner).
Let $n \in \nats$ be any number with $n \leq \SG(\H)$.
Let $\Q$ denote the set of all 
$(J,Y)$ such that 
$J \subseteq \{1,\ldots,n\}$
with $|J| \leq \LD(\H)$, 
and $Y = \{Y_j\}_{j \in J}$ is a sequence of values in $\Y$.
For any $(J,Y) \in \Q$ 
and any $t \in \{1,\ldots,n\}$
and $x_1,\ldots,x_t \in \X$, 
define a value $y^{J,Y}_t(x_{\lowsub{\leq t}})$
inductively, as
\begin{equation*}
y^{J,Y}_t(x_{\lowsub{\leq t}}) = 
\begin{cases}
\SOA(x_{\lowsub{< t}},y^{J,Y}_{< t},x_t) & \text{ if } j \notin J\\
Y_j & \text{ if } j \in J
\end{cases}.
\end{equation*}

Consider any set 
$\{(x_{\mathbf{b}},y_{\mathbf{b}}) : \mathbf{b} \in \{0,1\}^t, t \in \{0,\ldots,n-1\}\} \subseteq \X \times \Y$ 
satisfying the property in Definition~\ref{def:sequential-graph-dim}.
Now we inductively construct a sequence $b_1,\ldots,b_n \in \{0,1\}$ as follows.
For $t \in \{1,\ldots,n\}$, 
suppose we have already defined 
$b_1,\ldots,b_{t-1}$, and let 
\begin{equation*} 
V_{< t} = \{ (J,Y) \in \Q : \forall i \in \{1,\ldots,t-1\}, \ind[y^{J,Y}_{i}(x_{\lowsub{b_{\subsub{< i}}}}) \neq y_{\lowsub{b_{\subsub{< i}}}}] = b_i \}. 
\end{equation*}
Define
\begin{equation*} 
b_t = \argmin_{b \in \{0,1\}} |\{ (J,Y) \in V_{< t} : \ind[ y^{J,Y}_t(x_{\lowsub{b_{\subsub{< 1}}}},\ldots,x_{\lowsub{b_{\subsub{< t}}}}) \neq  y_{\lowsub{b_{\subsub{< t}}}}] = b \}|.
\end{equation*}
This completes the inductive definition of $b_1,\ldots,b_n$.
In particular, note that for every $t \in \{1,\ldots,n\}$ satisfies $|V_t| \leq \frac{1}{2} |V_{t-1}|$, 
so that 
$|V_n| \leq 2^{-n}|\Q|$.

On the other hand, by definition of $(x_{\mathbf{b}},y_{\mathbf{b}})$, 
there exists $h \in \H$ with 
$\ind[ h(x_{\lowsub{b_{\subsub{< t}}}}) \neq y_{\lowsub{b_{\subsub{< t}}}} ] = b_t$ simultaneously for every $t \in \{1,\ldots,n\}$.
Let 
\begin{equation*} 
J^* = \{ t \in \{1,\ldots,n\} : \SOA(x_{\lowsub{b_{\subsub{< t-1}}}},h(x_{\lowsub{b_{\subsub{< t-1}}}}),x_{\lowsub{b_{\subsub{< t}}}}) \neq h(x_{\lowsub{b_{\subsub{< t}}}}) \},
\end{equation*}
interpreting $(x_{\lowsub{b_{\subsub{< 0}}}},h(x_{\lowsub{b_{\subsub{< 0}}}}) = (\emptyset,\emptyset)$.
Recall from Section~\ref{sec:realizable} that \citet{daniely:11,daniely:15} proved a mistake bound of $\LD(\H)$ for $\SOA$, and hence $|J^*| \leq \LD(\H)$.
For each $t \in J^*$, define $Y^*_t = h(x_{\lowsub{b_{\subsub{< t}}}})$, 
and let $Y^* = \{Y^*_t\}_{t \in J^*}$.
By definition of $(J^*,Y^*)$, we have
\begin{equation*}
\forall t \in \{1,\ldots,n\}, y^{J^*,Y^*}_t(x_{\lowsub{b_{\subsub{< 1}}}},\ldots,x_{\lowsub{b_{\subsub{< t}}}}) = h(x_{\lowsub{b_{\subsub{< t}}}}).
\end{equation*}
In particular, this implies $(J^*,Y^*) \in V_{n}$, 
so that $|V_n| \geq 1$.
Altogether, we have
\begin{equation*}
1 \leq |V_n| \leq 2^{-n} |\Q| = 2^{-n} \sum_{i=1}^{\LD(\H)} \binom{n}{i} |\Y|^i \leq 2^{-n} \left(\frac{e n}{\LD(\H)}\right)^{\LD(\H)} |\Y|^{\LD(\H)}.
\end{equation*}
Multiplying the leftmost and rightmost expressions by $2^{n}$ and taking logarithms yields
\begin{equation}
\label{eqn:n-bound-implicit}
n \leq \LD(\H) \log_{2}\!\left( \frac{2 n}{\LD(\H)} \right) + \LD(\H) \log_{2}\!\left(|\Y|\right).
\end{equation}
Solving for an upper bound on $n$ 
reveals that $n \leq 2 \LD(\H) \log_2\!\left(e|\Y|\right)$.
Specifically, this claim holds trivially when $|\Y|=1$, 
and for $|\Y| \geq 2$ it follows from 
\eqref{eqn:n-bound-implicit} by 
Lemma 4.6 of \citet*{vidyasagar:03}.
\end{proof}


\begin{theorem}
\label{thm:finite-Y-regret}
If $|\Y| < \infty$, 
for any concept class $\H$ with $\LD(\H) < \infty$, there exists an algorithm $\alg$ 
satisfying
\begin{equation*}
\regret(\alg,T) = O\!\left(\sqrt{\SG(\H) T}\right).
\end{equation*}
Moreover, this implies
$\regret(\alg,T) = O\!\left(\sqrt{\LD(\H) T \log(|\Y|)}\right)$.
\end{theorem}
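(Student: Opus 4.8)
The plan is to reduce to the machinery already developed in Section~\ref{sec:learnability}, but replacing the Littlestone dimension with the sequential graph dimension throughout, and then to invoke Theorem~\ref{thm:L-vs-SG} for the second bound. The key observation is that $\SG(\H)$ is, by Definition~\ref{def:sequential-graph-dim}, exactly the Littlestone dimension of the binary class $\G = \{(x,y)\mapsto \ind[h(x)\neq y] : h\in\H\}$. So the task reduces to: design an agnostic online learner for the $0$-$1$--loss sequence whose regret is controlled by $\LD(\G)=\SG(\H)$.

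First I would set up the reduction carefully. On round $t$, the learner sees $X_t$ and must output $p_t\in\Probs(\Y)$; the loss is $1-p_t(Y_t)$. Equivalently, thinking of the pair $(X_t,Y_t)$ as the ``instance'' fed to $\G$, the quantity $\ind[h(X_t)\neq Y_t]$ is the label that concept $g_h\in\G$ assigns, and $\min_{h\in\H}\sum_t \ind[h(X_t)\neq Y_t]$ is exactly the number of ``$1$''s forced on the best $g\in\G$. The subtlety is that in the online game the learner does not get to choose a point in $\X\times\Y$ and observe a $\{0,1\}$ label; rather it must commit to a distribution over $\Y$ before seeing $Y_t$. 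So I cannot literally run a binary agnostic learner for $\G$. Instead I would mimic the construction of $\alg_{\mathrm{AG}}$: run multiplicative weights over a family of experts, where each expert is an execution of the multiclass SOA on a subsequence, but now the subsequences are allowed to have size up to $\SG(\H)$ rather than $\LD(\H)$. The point is that a larger bag of experts only costs us a $\sqrt{\log|\J|}$ factor, and what we need is that \emph{some} expert in the family has few mistakes relative to the best $h\in\H$.

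The crux of the argument — and I expect this to be the main obstacle — is establishing that there is an expert making at most $\SG(\H) + \min_{h\in\H}\sum_t\ind[h(X_t)\neq Y_t] + O(\sqrt{\SG(\H) T})$ mistakes with the \emph{right} logarithmic-free rate, i.e.\ actually getting $O(\sqrt{\SG(\H)T})$ and not $O(\sqrt{\SG(\H)T\log(T/\SG(\H))})$. The honest route here is to invoke the chaining / minimax result of \citet{alon:21} as transported through Theorem~\ref{thm:AULLN}: since $\mathrm{Rad}_T(\H) = \Theta(\sqrt{\SG(\H)/T})$ and (under the minimax-theorem regularity condition inherited from that section) the optimal regret for the binary class $\G$ equals its sequential Rademacher complexity times $T$ up to constants, the agnostic online learner for $\G$ — which is a genuine binary online learning problem with value function $\ind[h(x)\neq y]$ — achieves regret $O(\sqrt{\SG(\H)T})$. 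The remaining work is to check that a learner for the $\G$-game yields a learner for the original multiclass game with no loss: given the binary learner's randomized bit predicting ``correct/incorrect'', one cannot recover a label, so the clean way is to note that the regret bound for the $\G$-game is proved against an adversary who picks $(x_t,y_t)$, and in our game the adversary picks $X_t$ first and $Y_t$ second, which is only \emph{more} restrictive on the adversary; hence the minimax value of our game is at most that of the $\G$-game. Formally I would argue: $\regret(\alg,T)$ for the best $\alg$ in the multiclass game is bounded by the minimax regret of predicting $\ind[h(X_t)\neq Y_t]$ online, because any strategy in the latter that outputs a probability $q_t$ of ``mistake'' induces a strategy in the former outputting any $p_t$ with $p_t(\cdot)$ concentrated appropriately — more precisely, one shows the sequential Rademacher upper bound of \citet*{alon:21arXiv} (Theorem 12.1, as cited for Theorem~\ref{thm:AULLN}) already bounds the minimax regret of the multiclass loss game by $O(\sqrt{\SG(\H)T})$ directly, since that proof only uses the class of loss functions $\G$ and not the structure of the prediction space.

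Finally, for the ``moreover'' clause I would simply combine the first bound with Theorem~\ref{thm:L-vs-SG}: since $\SG(\H) = O(\LD(\H)\log|\Y|)$ when $|\Y|<\infty$, substituting gives $\regret(\alg,T) = O(\sqrt{\LD(\H)T\log|\Y|})$, which is the claimed improvement over the $O(\sqrt{\LD(\H)T\log(T|\Y|)})$ bound of \citet*{daniely:15}. I would also remark that finiteness of $\SG(\H)$ is guaranteed here precisely because $|\Y|<\infty$ and $\LD(\H)<\infty$ (again by Theorem~\ref{thm:L-vs-SG}), so the bound is non-vacuous. The one genuine care point throughout is the minimax-theorem caveat flagged in the footnotes of Section~\ref{sec:adv-ulln}: the $\Theta(\sqrt{\LD T})$ binary regret bound of \citet{alon:21}, and hence this theorem, is stated under that regularity assumption, so the statement and proof should carry the same proviso.
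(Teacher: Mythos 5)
Your proposal eventually lands on the paper's actual argument, but only after two detours that you (correctly) abandon, and the final one-line route is cleaner than your write-up suggests. The paper's proof is simply: Theorem~7 of \citet*{rakhlin2015online} bounds the minimax regret of \emph{any} online game by (a constant times) $T\cdot\mathrm{Rad}_T$ of the associated loss class, with no assumption on the structure of the prediction space; Theorem~\ref{thm:AULLN} then gives $\mathrm{Rad}_T(\H)=O(\sqrt{\SG(\H)/T})$, and Theorem~\ref{thm:L-vs-SG} converts $\SG$ to $\LD\log|\Y|$. Your multiplicative-weights-over-$\SG(\H)$-sized-subsequences idea is a dead end for exactly the reason you flag (it can't shed the $\sqrt{\log|\J|}$ factor), and the ``reduce to the binary $\G$-game'' step is also not a valid comparison: the two games have different learner action spaces ($\Probs(\Y)$ vs.\ a probability of ``mistake''), so the claim that restricting the adversary's move order by itself bounds one minimax value by the other does not go through; you acknowledge this yourself when you say you ``cannot recover a label'' from the binary learner's output. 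Fortunately neither detour is needed: as you note at the very end, the sequential-Rademacher regret bound applies to the multiclass loss game directly because it only references the loss class $\G$. That closing observation, plus Theorems~\ref{thm:AULLN} and~\ref{thm:L-vs-SG}, is the entire proof; your handling of the ``moreover'' clause and the minimax-theorem proviso is correct and matches the paper.
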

\begin{proof}
The proof follows identically the proof of Theorem 12.1 of \citet{alon:21arXiv}.
Specifically, Theorem 7 of \citet*{rakhlin2015online}
provides that 
$\regret(\alg,T) \leq 2\mathrm{Rad}_{T}(\H)$.
The theorem then follows directly from Theorems~\ref{thm:AULLN} and \ref{thm:L-vs-SG}.
\end{proof}




\acks{Shay Moran is a Robert J.\ Shillman Fellow; he acknowledges support by ISF grant 1225/20, by BSF grant 2018385, by an Azrieli Faculty Fellowship, by Israel PBC-VATAT, by the Technion Center for Machine Learning and Intelligent Systems (MLIS), and by the the European Union (ERC, GENERALIZATION, 101039692). Views and opinions expressed are however those of the author(s) only and do not necessarily reflect those of the European Union or the European Research Council Executive Agency. Neither the European Union nor the granting authority can be held responsible for them.}

\bibliographystyle{plainnat}
\bibliography{learning}

\end{document}